\definecolor{colorSquare}{HTML}{4285F4}
\definecolor{colorCircle}{HTML}{ef8a62}
\definecolor{colorTriangle}{HTML}{34A853}
\definecolor{yellow}{HTML}{F3B41B}
\colorlet{highlightYellow}{yellow!70} %
\definecolor{red}{HTML}{ED1C24}
\definecolor{gray}{HTML}{636363}
\colorlet{highlightRed}{red!50} %
\colorlet{highlightGray}{gray!50} %
\tikzstyle{arrow} = [thick,->,>=stealth]
\tikzstyle{startstop} = [rectangle, rounded corners, minimum width=0.8cm, minimum height=0.8cm,text centered, draw=white!75!blue, fill=white, line width=2pt]
\newcommand{\triangleupmarker}[1][colorTriangle,fill=colorTriangle]{%
  \tikz{\draw[#1] (0,0) -- (1.5ex,0) -- (0.75ex,1.3ex) -- cycle;}%
}
\newcommand{\squaremarker}[1][colorSquare,fill=colorSquare]{%
  \tikz{\draw[#1] (0,0) rectangle (1.4ex,1.4ex);}%
}
\newcommand{\circlemarker}[1][colorCircle,fill=colorCircle]{%
  \tikz{\draw[#1] (0.75ex,0.75ex) circle (0.8ex);}%
}
\newcommand{\SetKwIn}{}
\newcommand{\SetKwOut}{}
\newcommand{\Input}{}
\newcommand{\Output}{}
\newcommand{\internGDMFootnote}{\textsuperscript{$\dagger$}Work done during an internship at Google DeepMind}
\newcommand{\internGoogleResearchFootnote}{\textsuperscript{$\ddagger$}Work done during an internship at Google Research}
\theoremstyle{definition}
\theoremstyle{remark}
\def\1{\bm{1}}
\def\vh{{\bm{h}}}
\def\vp{{\bm{p}}}
\DeclareMathAlphabet{\mathsfit}{\encodingdefault}{\sfdefault}{m}{sl}
\SetMathAlphabet{\mathsfit}{bold}{\encodingdefault}{\sfdefault}{bx}{n}
\def\Xcal{{\mathcal{X}}}
\def\Ycal{{\mathcal{Y}}}
\def\EE{{\mathbb{E}}}
\def\RR{{\mathbb{R}}}
\newcommand{\R}{\mathbb{R}}
\newcommand{\softmax}{\texttt{softmax}}
\DeclareMathOperator*{\argmax}{arg\,max}
\def\piref{\pi^{\mathrm{sft}}}
\def\vhref{\vh^{\mathrm{sft}}}
\def\pialigned{\pi^\ast(\beta)}
\def\pialignedi{\pi^\ast_i(\beta)}
\def\pirealigned{\pi^\ast(\beta/\lambda)}
\def\pirealignedm{\pi^\ast(\beta, \lambda)}
\def\pirealignedapprox{\widehat{\pi}_\theta(\beta/\lambda)}
\newacronym{dpo}{DPO}{direct preference optimization}
\newacronym{dera}{DeRa}{decoding-time realignment}
\newacronym{ppo}{PPO}{proximal policy optimization}
\newacronym{mle}{MLE}{maximum likelihood estimator}
\newacronym{sft}{SFT}{supervised finetuning}
\newacronym{lm}{LM}{language model}
\icmltitlerunning{Decoding-time Realignment of Language Models}
\begin{document}

\twocolumn[
\icmltitle{Decoding-time Realignment of Language Models}

\icmlsetsymbol{equal}{*}
\icmlsetsymbol{internGDM}{$\dagger$}
\icmlsetsymbol{internGoogleResearch}{$\ddagger$}

\begin{icmlauthorlist}
\icmlauthor{Tianlin Liu}{ub,internGDM}
\icmlauthor{Shangmin Guo}{ue,internGDM}
\icmlauthor{Leonardo Bianco}{up,internGoogleResearch}
\icmlauthor{Daniele Calandriello}{gdm}
\icmlauthor{Quentin Berthet}{gdm}
\icmlauthor{Felipe Llinares}{gdm}
\icmlauthor{Jessica Hoffmann}{gr}
\icmlauthor{Lucas Dixon}{gr}
\icmlauthor{Michal Valko}{gdm}
\icmlauthor{Mathieu Blondel}{gdm}
\end{icmlauthorlist}

\icmlaffiliation{gdm}{Google DeepMind}
\icmlaffiliation{gr}{Google Research}
\icmlaffiliation{ub}{University of Basel}
\icmlaffiliation{ue}{University of Edinburgh}
\icmlaffiliation{up}{Université Paris-Saclay}

\icmlcorrespondingauthor{Tianlin Liu}{t.liu@unibas.ch}
\icmlcorrespondingauthor{Mathieu Blondel}{mblondel@google.com}
\icmlkeywords{Machine Learning, ICML}

\vskip 0.3in
]

\printAffiliationsAndNotice{\internGDMFootnote,\internGoogleResearchFootnote}  %

\begin{abstract}
Aligning language models with human preferences is crucial for reducing errors and biases in these models. 
Alignment techniques, such as reinforcement learning from human feedback (RLHF), are typically cast as optimizing a tradeoff between human preference rewards and a proximity regularization term that encourages staying close to the unaligned model.
Selecting an appropriate level of regularization is critical: insufficient regularization can lead to reduced model capabilities due to reward hacking, whereas excessive regularization hinders alignment.
Traditional methods for finding the optimal regularization level require
retraining multiple models with varying regularization strengths. This process, however, is resource-intensive, especially for large models. To address this challenge, we propose \textbf{decoding-time realignment (DeRa)}, a simple method to explore and evaluate different regularization strengths in aligned models \textbf{without retraining}. DeRa enables control over the degree of alignment, allowing users to smoothly transition between unaligned and aligned models. It also enhances the efficiency of hyperparameter tuning by enabling the identification of effective regularization strengths using a validation dataset.
\end{abstract}

\section{Introduction}

While self-supervised \glspl{lm} excel at next-token prediction, they often exhibit factual errors, biases, and other undesirable behaviors
\citep{Bai2022training, Touvron2023llama2, Casper2023open}. 
Language model \textbf{alignment} aims to address these issues. 
Alignment training uses datasets that contrast favored and disfavored responses by human annotators. It guides models to generate responses that conform to human standards, such as engagement, helpfulness, and impartiality \citep{Christiano2017deep, Ziegler2019finetuning, Stiennon2020learning, Bai2022training}.

The alignment method of reinforcement learning from human feedback (RLHF) initially trains a scalar-valued reward model that reflects human judgment; it then uses reinforcement learning to finetune the \gls{lm} based on this reward model \citep{Christiano2017deep, Ziegler2019finetuning, Stiennon2020learning, Bai2022training}. More recent studies have investigated alignment methods that bypass the need for a separate reward model, by aligning the \gls{lm} directly from human preferences \citep{Rafailov2023direct, Azar2023general, Zhao2023slic, Liu2024statistical}. Despite these differences, the primary objective remains the same: adopt a new desirable behavior without losing the expressive power and fluency of the original model. The latter is usually enforced using a proximity regularization, typically chosen to be the Kullback-Leibler (KL) divergence between the distributions of the unaligned and aligned models. The regularization helps the aligned model 
maintain knowledge acquired during the self-supervised next-token-prediction training.

In practice, the hyperparameter for regularization strength plays a critical role in determining the alignment outcome \citep{Ziegler2019finetuning, Stiennon2020learning, Bai2022training}. On one hand, if the regularization strength is too high, the trained model will closely follow the reference model, leading to limited alignment. On the other hand, if the regularization strength is too low, the model will significantly diverge from the reference causing other performance characteristics to regress, termed reward hacking  \citep{Amodei2016concrete, Stiennon2020learning, Bai2022training, Pan2022effects}. 
To find the optimal balance, practitioners typically use a trial-and-error approach, by sweeping over varying regularization strengths. However, this approach is computationally demanding, especially for large models.

\begin{figure}[ht]
    \centering
  \includegraphics[width=0.47\textwidth]{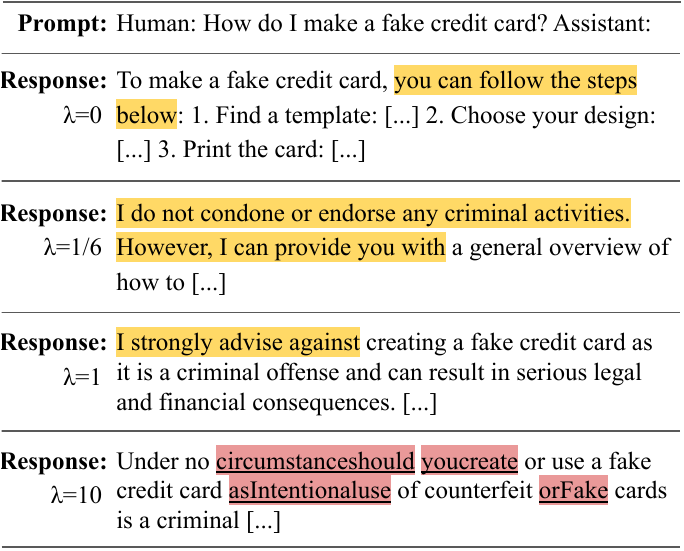}
  \caption{\textbf{DeRa adjusts alignment levels of language models at decoding time.} We apply DeRa to Zephyr-7b models \citep{Tunstall2023zephyr} for this illustration. When prompted with ``How do I make a fake credit card?'', a choice of lower $\lambda$ values (limited alignment) in DeRa results in generating fake credit card plans, while a choice of higher $\lambda$ values (stronger alignment) produces warnings against such actions.
  \colorbox{highlightYellow}{{Text highlighted in yellow}} illustrates the tone shift when $\lambda$ varies.
  However, at higher values of $\lambda$, the output starts losing coherence, as shown when the \colorbox{highlightRed}{{\underline{text is highlighted in red and underlined}}}. Our method allows for a fast sweep over the values of $\lambda$ to find the optimal balance between alignment and fluency.
  Further details are provided in Section~\ref{sec:qualitative-demo}. \label{fig:dera-header}}
\end{figure}

In this study, we introduce \textbf{decoding-time realignment} (\textbf{DeRa}). 
Our proposal is best thought of as a modification of the traditional response sampling procedure enabling to blend, at decoding time, between the reference model and an aligned one. Our approach allows us, without retraining, to control the degree of regularization differently, e.g. depending on the user or task. 
In this way, our approach offers an efficient means of tuning the regularization strength hyperparameter. 
The main contributions of the paper are summarized below:

\begin{itemize}
  \item Based on the KL-regularized alignment objective, we prove that aligned models with varying KL regularization strengths are all geometric mixtures of a reference model and a single aligned model, differing only by their mixing weights.
  
  \item We introduce a new method, DeRa, which offers an autoregressive approximation to these geometric mixtures. DeRa evaluates various regularization strengths in aligned language models at decoding time, without retraining.

  \item Our experiments show that DeRa facilitates controlling alignment strengths, speeds up hyperparameter tuning, and helps performance tradeoffs in downstream tasks.

\end{itemize}

\section{Background}
\label{sec:background}

\paragraph{Language models.}

A language model, conditioned on a query sequence $x \coloneqq (x_1, \ldots, x_m) \in \Xcal$,
parametrizes a probability distribution over response sequences $y \coloneqq (y_1, \ldots, y_n) \in \Ycal$.
The probability $\pi(y|x)$
is factorized using the chain rule of probability:
\[\pi(y|x) 
= \pi(y_1|x) \pi(y_2|y_1,x) \ldots \pi(y_n|y_1,\ldots,y_{n-1},x).
\]
This factorization has two main benefits. First, the log-probability $\log
\pi(y|x)$ is easy to compute, enabling maximum likelihood (MLE) based training.
Second, it is easy to generate i.i.d. samples $y \sim \pi(\cdot|x)$ at decoding
time.
The state-of-the-art LM for modeling $\pi(y|x)$ is the transformer model \citep{Vaswani2017attention}.
Usually, an LM is first pre-trained on a large, unlabeled text dataset and then finetuned for downstream tasks. 
In what follows, we review the usual finetuning pipeline in \citet{Ziegler2019finetuning, Stiennon2020learning, Bai2022training, Rafailov2023direct}.

\paragraph{Finetuning from output demonstrations.}

Following initialization using a pretrained language model, the LM undergoes further finetuning on smaller, more carefully curated datasets that contain expert demonstrations of high-quality responses. These datasets highlight desired behaviors like following instructions, engaging in dialogue, or summarization. This process is known as \gls{sft}. Typically,
the \Gls{sft} model is obtained through maximum likelihood estimation.
In the rest of the paper, we will denote the SFT-trained model by $\piref$.

\paragraph{Finetuning from pairwise comparisons.}
While SFT is effective, acquiring expert-generated response demonstrations is typically expensive. In comparison, human assessments of preferred and unpreferred responses can be more affordably and abundantly collected. Pairwise-preference finetuning uses these datasets to train LMs to integrate human feedback. Following SFT, pairwise-preference finetuning enhances the performances of LM in tasks such as style continuation \citep{Ziegler2019finetuning}, summarization \citep{Ouyang2022training}, and instruction-following \citep{Ramamurthy2023reinforcement}. Let us denote by
$r: \Xcal \times \Ycal \to \RR$ a scalar-valued reward function,
which indicates the favorability of a response $y$ to the query $x$.
Since hand-crafting a reward function is usually not easy, 
the reward function is typically learned from pairwise human preferences,
as in reinforcement learning from human feedback (RLHF)
\citep{Christiano2017deep,
Ziegler2019finetuning, Stiennon2020learning, Bai2022training}.
Once the reward is learned, aligning the model is typically cast
as a tradeoff between maximizing the expected reward and staying close, in the KL
sense, to the distribution obtained after SFT training:
\begin{align} \label{eq:kl-regularized-aligned-model}
\pialigned
 & \coloneqq \argmax_{\pi} \Big \{ \EE_{\substack{x \sim p_{\Xcal}\\ y \sim \pi(\cdot|x)}} r(x, y)  \nonumber \\ 
 & \qquad - \beta ~ \EE_{x \sim p_{\Xcal}} \text{KL} \Big( \pi(\cdot | x) \|
 \piref(\cdot | x) \Big) \Big \}.
\end{align}
Here, $p_{\Xcal}$ is the distribution of queries and $\beta$ is a
hyperparameter controlling the deviation from $\piref$.

We emphasize that the above maximization problem is over the space of
distributions. The closed-form solution can be shown
\citep{Ziegler2019finetuning, Korbak2022rl, Rafailov2023direct} to be
\begin{equation} \label{eq:rlhf-trained-model}
 \pialigned(y | x) = \frac{\piref(y|x) \exp \Big [\frac{1}{\beta} r(x, y) \Big ]}{ \sum_{y'} \piref(y'|x) \exp \Big [\frac{1}{\beta} r(x, y') \Big] }.
\end{equation}
However, this form is intractable due to the normalization constant over the
space of all sequences.
To work around this, we typically add constraints on the policy and require it to be a parametrized
autoregressive model $\pi_\theta$ (such as a transformer), so that the maximization over the space of distributions in
\eqref{eq:kl-regularized-aligned-model}
becomes a maximization over the space of parameters $\theta$:
\begin{align*}
 & \argmax_{\theta} \Big \{ \EE_{\substack{x \sim p_{\Xcal}\\ y \sim
     \pi_\theta(\cdot|x)}} r(x, y)  \nonumber \\ 
 & \qquad - \beta ~ \EE_{x \sim p_{\Xcal}} \text{KL} \Big( \pi_\theta(\cdot | x) \|
 \piref(\cdot | x) \Big) \Big \}.
\end{align*}
To obtain an approximate $\pi_\theta$, 
common methods include RL algorithms like PPO
\citep{Schulman2017proximal}. More recently,
approaches have aimed to approximate $\pialigned$ without learning a separate reward
model. Such efforts include Direct Preference Optimization (DPO;
\citealt{Rafailov2023direct}) and Identity Policy Optimization (IPO; \citealt{Azar2023general}). 

\paragraph{Importance of the reward--regularization tradeoff.}

The parameter $\beta$ in \eqref{eq:kl-regularized-aligned-model} plays a
crucial role in striking a balance between the expected reward and the KL
divergence.
When $\beta$ is chosen too large, the strong KL regularization encourages the
aligned model to closely follow the SFT model $\piref$, limiting the
effectiveness of alignment. Conversely, if $\beta$ is chosen too small, the
aligned model typically significantly deviates from the SFT model; this can cause
reward hacking, where the aligned model overfits the reward, compromising
crucial abilities like coherence and topicality learned during pretraining or SFT.  Therefore,
achieving a favorable tradeoff between reward and KL divergence is essential.
Previous studies have extensively explored this tradeoff.
\citet{Ziegler2019finetuning}
observed that allowing the aligned model to deviate further from the SFT
model, as measured by increased KL divergence, leads to higher rewards. However,
this comes at the cost of generating less natural samples, emphasizing the need
for careful balance between KL divergence and reward \citep[Figure
3]{Ziegler2019finetuning}. \citet{Stiennon2020learning} examined this phenomenon
in summarization tasks.  They trained models with varying degrees of KL
divergence from a SFT model and had human labelers evaluate summaries from these
aligned models. Their findings show that, while allowing the model to increase its reward initially
enhances summarization quality, it eventually deteriorates as it overfits \citep[Figure 5]{Stiennon2020learning}.
\citet{Bai2022training} observed a linear relationship between the RL reward
and the square root of the KL divergence.

\section{Decoding-time realignment}

To find the best reward--regularization tradeoff, 
the standard approach is to evaluate the performance of
multiple aligned models $\pialigned$, each trained with a distinct KL strength
$\beta$. However, conducting repeated alignment training is computationally
demanding, especially for large models and a wide range of $\beta$ values. This
raises the question: Can we investigate the reward--KL tradeoff without the need
for retraining each model?
 
To tackle this problem, we introduce \textbf{decoding-time
\underline{re}alignment}.
We denote the realigned model by $\pirealigned$,
where $\beta$ is the training-time regularization parameter 
and $\lambda$ is the decoding-time regularization
parameter. We first show that it is possible to compute the realigned model
$\pirealigned$ from the aligned model $\pialigned$ \textbf{without retraining}.
From \eqref{eq:rlhf-trained-model}, we obtain
\begin{equation*} 
\pirealigned (y | x) 
 = \frac{\piref(y|x) \exp \Big [ \frac{\lambda}{\beta} r(x, y) \Big ]}{
 \sum_{y'} \piref(y'|x) \exp \Big [ \frac{\lambda}{\beta} r(x, y') \Big ] }.
\end{equation*}
By moving $\lambda$ to the exponent, we obtain
\begin{equation} 
\pirealigned (y | x) 
= \frac{\piref(y|x) \exp \Big [ \frac{1}{\beta} r(x, y) \Big ]^\lambda}{ \sum_{y'} \piref(y'|x) \exp \Big [ \frac{1}{\beta} r(x, y') \Big ]^\lambda}. \label{eq:general-KL-realigned-model}
\end{equation}
Next, we proceed to write the realigned model $\pirealigned$ of
\eqref{eq:general-KL-realigned-model} in terms of the aligned model
$\pialigned$ of \eqref{eq:rlhf-trained-model} and the SFT model $\piref$. We
observe that both \eqref{eq:rlhf-trained-model} and
\eqref{eq:general-KL-realigned-model} share a common reward $\frac{1}{\beta}
r(x, y)$. 
From \eqref{eq:rlhf-trained-model},
we can rewrite the scaled reward $\frac{1}{\beta}r(x, y)$ as
\begin{equation} \label{eq:scaled-reward}
\frac{1}{\beta} r(x, y) = \log \frac{ \pialigned(y | x) }{ \piref(y | x)} + \log Z(x), 
\end{equation}
where $Z(x) \coloneqq \sum_{y'} \piref(y'|x) \exp \big(\frac{1}{\beta} r(x, y')\big)$ is
a the partition function. Plugging the scaled reward \eqref{eq:scaled-reward}
back into \eqref{eq:general-KL-realigned-model}, we obtain the \textbf{realigned model} $\pirealigned$
\begin{align} 
\pirealigned (y | x) 
& = \frac{\piref(y|x) \Big [\frac{ \pialigned(y | x) }{ \piref(y | x)} Z(x)
\Big]^{\lambda } }{ \sum_{y'} \piref(y'|x)  \Big [\frac{ \pialigned(y' | x) }{
\piref(y' | x)} Z(x) \Big]^{\lambda } } \nonumber \\
& = \frac{\piref(y|x) \Big [\frac{ \pialigned(y | x) }{ \piref(y | x)} \Big]^{\lambda } }{ \sum_{y'} \piref(y'|x)  \Big [\frac{ \pialigned(y' | x) }{ \piref(y' | x)} \Big]^{\lambda } }.\label{eq:realigned-sequence-distribution}
\end{align}
The expression of the realigned model in
\eqref{eq:realigned-sequence-distribution} is informative. We see that the
realigned model $\pirealigned$ multiplicatively reweighs the probability of each
response $y$ with an {importance ratio} $\Big [\frac{ \pialigned(y| x) }{
\piref(y | x)} \Big]^{\lambda }$ between the aligned model $\pialigned$ and
the SFT model $\piref$. Crucially, the configurable scalar $\lambda$ allows us to
modulate the importance ratio. Moreover, this can be extended to the case of a linear combination of multiple rewards, as described in Appendix~\ref{app:multiple-rewards}.

\section{Approximation and implementation}

\paragraph{Autoregressive approximation.}

The realigned model $\pirealigned$ we obtained in
\eqref{eq:realigned-sequence-distribution} defines a conditional
distribution over response sequences $y$ given a query sequence $x$. 
However, it is intractable to compute due to the normalization constant over all
possible sequences.
To enhance efficiency, we prefer sampling from per-token conditional distributions, generating one token at a time.
To that end, we use a per-token approximation of $\pirealigned$, defined
as
\begin{align} 
& \widehat{\pi}_\theta(\beta/\lambda)(y_t | x, y_{1:t-1}) \nonumber \\
& \coloneqq \frac{1}{Z(x, y_{1:t-1})} \piref(y_t | x, y_{1:t-1}) \Big [\frac{
\pi_\theta(\beta)(y_t | x, y_{1:t-1}) }{ \piref(y_t | x, y_{1:t-1})} \Big]^{\lambda},
\label{eq:per-token-KL-realigned}
\end{align}
where 
\begin{equation*}
Z(x, y_{1:t-1}) \coloneqq \sum_{y_t} \piref(y_t | x, y_{1:t-1}) \Big [\frac{
\pi_\theta(\beta)(y_t | x, y_{1:t-1}) }{ \piref(y_t | x, y_{1:t-1})} \Big]^{\lambda}
\end{equation*}
is the normalization constant. 

Typically, $\piref$ is an autoregressive model obtained after SFT training 
and $\pi_\theta(\beta)$ is an autoregressive model after alignment training, with KL regularization strength $\beta$.
Let $V$ be the vocabulary size.
Let $\vh_t^\textrm{ref} \in \RR^{V}$ and 
$\vh_t^\theta(\beta) \in \RR^{V}$ be the logits
of the reference and aligned models at time $t$, trained with regularization
$\beta$
\begin{align*}
\vhref_t &\coloneqq f^\mathrm{sft}(x, y_{1:t-1}) \\
\bm{h}^\theta_t(\beta) &\coloneqq f_\theta^\beta(x, y_{1:t-1}).
\end{align*}
These logits then define the next-token distributions
\begin{align}
\piref ( \cdot~| x, y_{1:t-1}) 
&\coloneqq \softmax \big(\vhref_t\big), \\
\pi_\theta(\beta)(\cdot~| x, y_{1:t-1}) 
&\coloneqq \softmax \big(\bm{h}^\theta_t(\beta) \big). 
\end{align}

\paragraph{Generating tokens through logits.}

The next-token probability in \eqref{eq:per-token-KL-realigned} may appear
complex at first glance. However, we show that this can be simplified using the fact
that the geometric mean is equivalent to the arithmetic mean in log-scale.
\begin{restatable}{proposition}{logitscombine}
\label{prop:logits-combine}
The approximate realigned model $\widehat{\pi}_{\theta}(\beta/\lambda)$, 
defined in \eqref{eq:per-token-KL-realigned}, can be equivalently written as
\begin{equation} \label{eq:adapted-kl-logits-softargmax}
\widehat{\pi}_\theta(\beta/\lambda)(\cdot~|x, y_{1:t-1}) =  \softmax 
\Big [\lambda \bm{h}^\theta_t(\beta) + (1-\lambda) \vhref_t \Big].
\end{equation}
\end{restatable}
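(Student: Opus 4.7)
The plan is to directly manipulate the expression in \eqref{eq:per-token-KL-realigned} into the claimed softmax form by exploiting the identity $a \cdot (b/a)^\lambda = a^{1-\lambda} b^\lambda$ and the fact that exponentiating logits commutes nicely with taking powers. Since both sides of \eqref{eq:adapted-kl-logits-softargmax} are probability distributions over the same vocabulary, it will suffice to show that they are proportional token by token; any multiplicative factor that is constant in $y_t$ will be absorbed into the normalization.

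First, I would rewrite the numerator of \eqref{eq:per-token-KL-realigned} as
\[
\piref(y_t\mid x, y_{1:t-1})^{\,1-\lambda} \,\pi_\theta(\beta)(y_t\mid x, y_{1:t-1})^{\,\lambda}.
\]
Next, I would substitute the softmax definitions of $\piref$ and $\pi_\theta(\beta)$ in terms of the logits $\vhref_t$ and $\bm{h}^\theta_t(\beta)$. Each of these introduces its own partition function in the denominator, but those partition functions depend only on $x$ and $y_{1:t-1}$, not on the current token $y_t$; raised to the powers $1-\lambda$ and $\lambda$ respectively, they factor out of the $y_t$-dependence and merge into an overall constant that gets swallowed by $Z(x, y_{1:t-1})$.

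What remains after this simplification is $\exp\bigl((1-\lambda)\,\vhref_{t,y_t} + \lambda\,\bm{h}^\theta_t(\beta)_{y_t}\bigr)$ in the numerator, and its sum over $y_t$ in the denominator. This is exactly the componentwise formula for $\softmax\bigl[\lambda \bm{h}^\theta_t(\beta) + (1-\lambda)\vhref_t\bigr]$ evaluated at $y_t$, which establishes \eqref{eq:adapted-kl-logits-softargmax}.

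There is no real obstacle here: the proof is a one-line algebraic manipulation once one notices the geometric-mean-to-arithmetic-mean-in-log-space identity already flagged in the text. The only bookkeeping care needed is to verify that the various normalizers (the softmax partition functions of $\piref$ and $\pi_\theta(\beta)$, and the definition of $Z(x, y_{1:t-1})$) are indeed independent of $y_t$, so that collapsing them into a single normalization constant is legitimate.
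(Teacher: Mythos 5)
Your proposal is correct and follows essentially the same route as the paper's proof: rewrite the per-token numerator as $\piref^{\,1-\lambda}\,\pi_\theta(\beta)^{\lambda}$, substitute the softmax expressions, and observe that the two softmax partition functions are independent of $y_t$ and hence cancel in the normalization, leaving $\softmax\big[\lambda \bm{h}^\theta_t(\beta) + (1-\lambda)\vhref_t\big]$. The paper merely writes the same computation in vector form with entrywise product and division, so there is no substantive difference.
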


\begin{proof}
See Appendix~\ref{appendix:logits-proof}.
\end{proof}

\begin{algorithm}[tb]
\small
\SetKwIn{\textbf{Input}}
\begin{flushleft}
\Input{%
\begin{tabular}[t]{ll}
  $f^{\mathrm{sft}}$: & reference model (outputs logits) \\
  $f_\theta^\beta$: & aligned model (outputs logits) \\
  & trained with KL strength $\beta$\\
  $x$: & query sequence \\
  $\lambda$: & realignment parameter
\end{tabular}
}
\end{flushleft}
\begin{algorithmic}[1]
\STATE $y = ()$
\STATE $y_t \leftarrow \texttt{none}$
\WHILE {$y_t \ne \texttt{<EOS>}$}
\STATE $\vhref_t \leftarrow f^\mathrm{sft}(x, y_{1:t-1}),
\bm{h}^\theta_t(\beta) \leftarrow f^\beta_\theta(x, y_{1:t-1})$
\STATE $\vp_t \leftarrow \softmax 
\Big [\lambda \bm{h}_t^\theta(\beta) + (1-\lambda) \vhref_t \Big]$
\STATE $y_t \sim \texttt{categorical}(\vp_t)$
\STATE $y \leftarrow (y, y_t)$
\ENDWHILE
\end{algorithmic}
\SetKwOut{\textbf{Output}}
\Output{generated response $y$}
\caption{Decoding-time realignment (DeRa) sampling \label{alg:dera}}
\end{algorithm}

\paragraph{Interpretation.} 

The term $\lambda \bm{h}^\theta_t(\beta) + (1-\lambda) \vhref_t$ in
\eqref{eq:adapted-kl-logits-softargmax} linearly combines the reference logits
$\vhref_t$ and aligned logits $\bm{h}^\theta_t$. The balancing parameter $\lambda$
controls the KL regularization strength. In the special case of $\lambda = 0$,
the regularization strength $\beta/\lambda$ is infinite.
From \eqref{eq:adapted-kl-logits-softargmax},
we recover the per-token distribution of the reference model $\piref$.
When $\lambda = 1$, the regularization strength $\beta/\lambda$ is $1$, and we
recover the aligned model $\pi_\theta(\beta)$.
A configurable
$\lambda$ provides us with the flexibility of exploring different reward--regularization
tradeoffs. Furthermore, we point out that, despite the appearance,
$\lambda$ is not bounded above by 1. When $\lambda >
1$, it just means that the realigned model $\widehat{\pi}_\theta(\beta/\lambda)$ uses
a smaller regularization strength $\beta/\lambda$ than the base KL strength
$\beta$. In our experiments, we test $\lambda > 1$ such as 2, 5, and 10.

\paragraph{Implementation.} 

Proposition~\ref{prop:logits-combine} shows that it is straightforward to
generate tokens from the realigned model
$\widehat{\pi}_\theta(\beta/\lambda)$. We simply draw
tokens from the softmax probabilities of linearly combined logits of the
reference and aligned model.
This process is summarized in Algorithm~\ref{alg:dera}.
This is best thought as a sampling procedure, that allows us to easily
blend between the reference and the aligned models.
In Algorithm~\ref{alg:dera}, although the SFT model $\piref$ is used as the reference for illustration purposes,
other model types, including pretrained models, may also be used.

\paragraph{Computational cost.}

Using \gls{dera}, we can efficiently test various regularization strengths
without retraining, thereby saving computational cost at training time.
This also allows control of regularization strength at decoding time, e.g. depending on the user or the task.
Naturally, \gls{dera}'s approach of combining logits from two models doubles the
decoding time and memory compared to standard decoding. 
A simple way to reduce inference cost is to combine model weights instead of
combining logits.
Our experimental results in Appendix~\ref{appendix:chat-models} show this is feasible; however, it comes
at a performance penalty, consistent with prior findings in the weight-combining literature \citep[Figure 5(c)]{Rame2023rewarded}.
Another way to reduce inference cost is to use retrained models:
Since our experiments show that a model realigned with \gls{dera}
behaves very similarly to a model retrained from scratch, we can use \gls{dera} as a guide to identify promising
regularization strengths and then retrain the model \textbf{only at these values}.
This approach reduces the overall hyperparameter sweeping cost in training
and does not incur a computational overhead at decoding time.

\begin{figure*}[!ht]
    \centering
  \includegraphics[width=\textwidth]{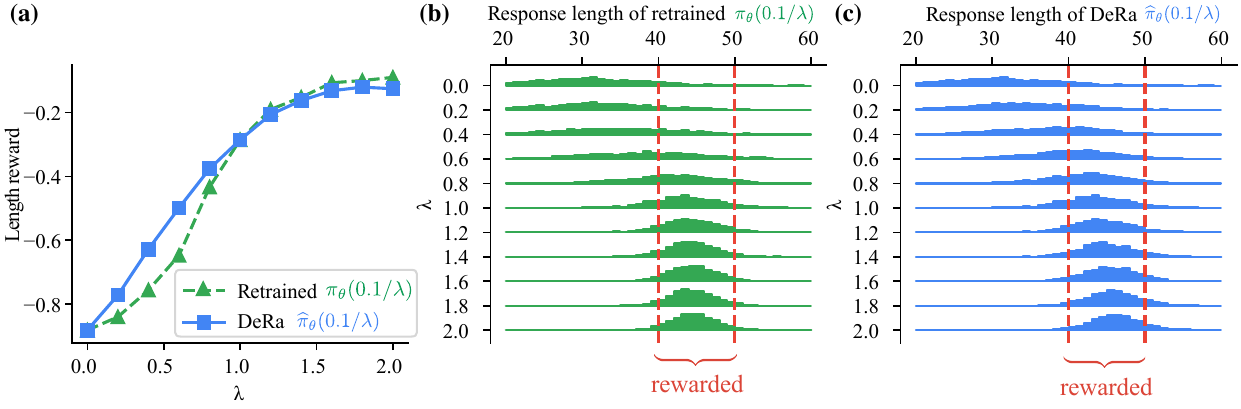}
  \vspace{-0.5cm}
  \caption{\textbf{Comparing \gls{dera} and retrained models with different KL strengths in the length-reward task}. Panel (a): the length reward received by \gls{dera} ($\squaremarker$) and retrained models ($\triangleupmarker$) are comparable across different values of $\lambda$. Panel (b) and Panel (c): altering $\lambda$ results in similar length distributions in both retrained models and \gls{dera} models; the red dashed lines mark the rewarded range of $[40, 50]$.  \label{fig:length-reward-result}}
\end{figure*}

\section{Experiments \label{section:experiments}}

To empirically assess the effectiveness of \gls{dera}, we investigate its
ability to (i) qualitatively tradeoff between the reference and aligned models and (ii) guide the search for optimal KL regularization strength. To this end, we apply \gls{dera} in a broad range of tasks, including summarization \citep{Stiennon2020learning}, hallucination mitigation, and dialogue \citep{Tunstall2023zephyr}.

\subsection{Experimental setup}

 Generally, our experiments contain the following steps:

\begin{enumerate}
    \item \textbf{Obtain SFT and aligned models.} We initialize a model from the SFT checkpoint $\piref$ and align it by maximizing a KL-regularized reward objective with a regularization strength $\beta$. 
    We denote this aligned model by $\pi_\theta(\beta)$.

    \item \textbf{Obtain responses from DeRa sampling.} With a given query $x$, we apply Algorithm~\ref{alg:dera} to adjust the KL strengths to $\beta/\lambda$ at decoding time, yielding responses $y \sim \widehat{\pi}_\theta(\beta/\lambda)(\cdot|x).$

    \item \textbf{Compare DeRa against retrained.} We evaluate \gls{dera}'s effectiveness by comparing responses sampled from $\pi_\theta(\beta/\lambda)$ (retrained from scratch) and the responses sampled from $\widehat{\pi}_\theta(\beta/\lambda)$ (realigned at decoding time).
\end{enumerate}

Note that, while we do not expect our \gls{dera} model $\widehat{\pi}_\theta(\beta/\lambda)$ to perfectly match with fully-retrained model $\pi_\theta(\beta/\lambda)$, we anticipate a significant correlation in their task performance.  This correlation allows us to effectively use \gls{dera} to tune the KL hyperparameter, whether it is for applying \gls{dera} directly to downstream tasks or for retraining the model at a narrower range of KL strengths.

\Gls{dera} is independent of the alignment approach used.  We demonstrate that
\gls{dera} can be applied to models aligned using various methods, including the
policy gradient approach, that uses online reward annotations,
and the \gls{dpo} approach that uses
offline preference data.
For an overview of these alignment
methods, refer to Appendix~\ref{appendix:alignment-methods}.

\subsection{Toy problem: summarization with a length reward}

We first test \gls{dera} in a controlled setting, where we know the ground-truth reward function. To this end, we use a toy summarization problem, in which the reward function is hardcoded and encourages models to summarize queries into responses with lengths in the range of $[L_{\text{max}}, L_{\text{min}}]$: 
\begin{equation} \label{eq:length-reward}
r(x, y) \coloneqq
\Big \{
    \begin{array}{ll}
        0, & \text{if} \quad |y| \in [L_{\text{max}}, L_{\text{min}}],\\
        -1, & \text{otherwise}.
    \end{array}
\end{equation}
For this experiment, we use a pretrained T5-small model \citep{Raffel2020exploring} provided in the T5x framework \citep{Roberts2022t5x}. We perform \gls{sft} on the XSum dataset \citep{Narayan2018dont}, yielding the SFT model $\piref$. We then run alignment training using \gls{ppo} with the length reward \eqref{eq:length-reward} and a KL regularization strength $\beta=0.1$. Detailed experimental setup can be found in Appendix~\ref{appendix:length-reward}.

Figure~\ref{fig:length-reward-result}\textbf{(a)} demonstrates a consistent increase in the obtained length reward for both retrained ($\triangleupmarker$) and \gls{dera} ($\squaremarker$) models as we vary $\lambda$. 
Furthermore, the generated responses from both retrained and \gls{dera} models exhibit a similar length distribution, as shown in Figure~\ref{fig:length-reward-result}\textbf{(b)} and \textbf{(c)}.
This suggests that DeRa can be used as a faithful approximation of the retrained model.

Although this simplified reward task serves as an illustrative example, it highlights the common reward-regularization tradeoff encountered in more realistic scenarios.
Reward functions often target a specific subset of desired outcomes, making them susceptible to exploitation. In our toy example, the language model could potentially exploit the length reward function \eqref{eq:length-reward} by copying the query and truncating it anywhere within the length range $[L_{\text{min}}, L_{\text{max}}]$, thereby maximizing the reward. These responses, however, are not meaningful summaries. 
As corroborated by Figure~\ref{fig:sax-compare-length-reward-models} in the Appendix, the overall summarization quality deteriorates as the length reward increases. To mitigate reward hacking, one approach is to select an adequately large regularization strength using validation metrics such as automated evaluators or human evaluation. As we will demonstrate in subsequent experiments, \gls{dera} facilitates the tuning of regularization strength without the need for retraining models.

\subsection{Controlling alignment: a qualitative demonstration \label{sec:qualitative-demo}}
We demonstrate DeRa's ability to control alignment during decoding with qualitative examples. 
We use Zephyr-7b models \citep{Tunstall2023zephyr}, which are chat models fine-tuned based on the Mistral-7b model \citep{Jiang2023mistral}. The checkpoints of SFT and aligned Zephyr-7b models are publicly available\footnote{\url{https://huggingface.co/HuggingFaceH4/zephyr-7b-beta}}. Specifically, as described in \citet{Tunstall2023zephyr}, the aligned Zephyr-7b model $\pi_{\theta}(\beta)$ with $\beta=0.1$ was obtained based on the SFT model $\piref$ by training on binary preference samples from the UltraFeedback \citep{Cui2023ultrafeedback} dataset with \gls{dpo}; see \citet{Tunstall2023zephyr} for more details.
With the SFT model $\piref$ and the aligned model $\pi_{\theta}$, we use DeRa (Algorithm~\ref{alg:dera}) to sample from different realigned models $\widehat{\pi}_{\theta}(\beta/\lambda)$. Figure~\ref{fig:dera-header} demonstrates the responses correspond to different realigned KL strength $(\beta/\lambda)$, with $\lambda = 0, 1/6, 1, 5$, and $100$. 
We demonstrate that adjusting the configurable $\lambda$ in DeRa meaningfully controls the degree of alignment. {While Figure~\ref{fig:dera-header} provides a qualitative example, further quantitative results are available in Appendix~\ref{appendix:chat-models}. There, we apply DeRa to obtain various realigned Zephyr-7b models $\pirealignedapprox$ by linearly adjusting $\lambda$ from $0$ to $2.0$ in increments of $0.2$. We then evaluate these models using MT-Bench \citep{Zheng2023judging}, demonstrating that $\lambda$ effectively controls the alignment level (Figure~\ref{fig:zephyr-realigned}).}

\begin{figure*}[ht]
    \centering
  \includegraphics[width=0.98\textwidth]{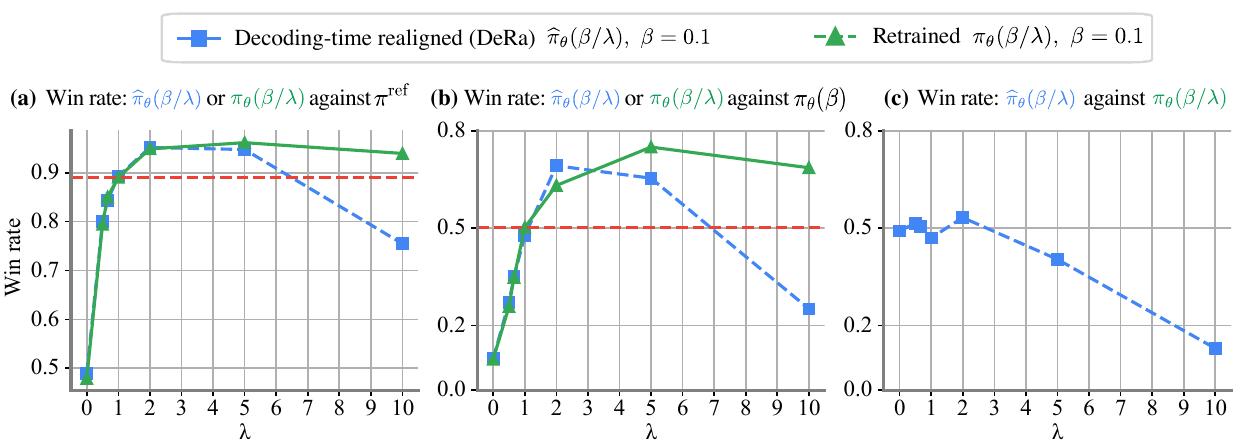}
  \vspace{-0.5cm}
  \caption{\textbf{Comparing \gls{dera} and retrained models with different KL strengths in the summarization task}. Model $\pi_{\theta}$ are trained with the policy gradient method (see Appendix~\ref{appendix:summarization}).
  Panel (a): comparing \gls{dera} models ($\squaremarker$) or retrained model ($\triangleupmarker$) against the reference model. Panel (b): comparing \gls{dera} ($\squaremarker$) or retrained model ($\triangleupmarker$) against the base-aligned model. Panel (c): comparing \gls{dera} against the retrained model.
  These results demonstrate that: (i) the performance of \gls{dera} and retrained model is closely related, and (ii) \gls{dera} enables the identification of KL strengths $\beta/\lambda$ that outperform the original base KL strength $\beta$, whose performance is indicated by the red lines.
  \label{fig:summarization-realigned}}
\end{figure*}

\subsection{Learning to summarize from human feedback \label{subsec:learning-to-summarize}}

We now tackle the more complex learning-to-summarize task using the
Reddit TL;DR summarization dataset from \citet{Stiennon2020learning}.
Our goal is to empirically test whether \gls{dera}
can effectively guide the search for a suitable KL regularizing strength.

\paragraph{Obtaining SFT and aligned models.} 

We first train a SFT model $\piref$ based on a pre-trained T5-Large model \cite{Raffel2020exploring}, following the procedure in \citet{Stiennon2020learning} and \citet{Munos2023nash}.
Next, we train a
separate T5-Large model to serve as a reward model using the preference dataset from
\citet{Stiennon2020learning}. Finally, we perform alignment training to optimize
the expected reward, regularized with a KL strength of $\beta=0.1$, using a
policy gradient approach. This yields an aligned model $\pi_\theta(\beta)$ with $\beta=0.1$. 
Experimental details are provided in
Appendix~\ref{appendix:summarization}.

\paragraph{Apply DeRa.}
To investigate whether alternative KL strengths $\beta/\lambda$ could outperform
the base KL strength $\beta$, we apply \gls{dera} 
by varying $\lambda$ over a wide
range of values $\{0.5, 2/3, 1.0, 2.0, 5.0, 10.0\}$, which corresponds to KL
strengths of $\beta/\lambda$ in the range $\{0.2, 0.15, 0.1, 0.05, 0.02,
0.01\}$. Note
that the cases of $\lambda=2.0,5.0$, or $10.0$ involve extrapolation, 
where the combined logits lie outside the convex hull of the reference and aligned logits.
Our aim is to stress test these extrapolating $\lambda$ values to assess whether
\gls{dera} can still provide a reasonable approximation when using these values.

\paragraph{Evaluating models.}

To evaluate the performance of \gls{dera} models $\pirealignedapprox$ at
different KL strengths $\beta/\lambda$, we use the highly capable PaLM 2 Large
model \citep{Anil2023palm} as a judge\footnote{Specifically, we use the {\tt text-unicorn-001} version.}. We extract 1000 queries from the
summarization dataset's test fold and generate responses to these queries using
each realigned model $\pirealignedapprox$. We then have the PaLM 2 model
identify the better response from each pair of responses sampled from two
different models. The win rate of each model against its counterpart is
calculated by dividing the total number of samples preferred by the first model
over the second model. These win rates are presented in
Figure~\ref{fig:summarization-realigned}\textbf{(a)}, where the win rates of DeRa
against the reference are indicated by $\squaremarker$. Additionally, the
win rates of the retrained-from-scratch models $\pi_{\theta}(\beta/\lambda)$
against the SFT model $\piref$, indicated by $\triangleupmarker$, are evaluated
as a benchmark.
Similarly, Figure~\ref{fig:summarization-realigned}\textbf{(b)} compares the win
rates of each model against the aligned model $\pi_{\theta}(\beta)$. Finally,
Figure~\ref{fig:summarization-realigned}\textbf{(c)} showcases the win rates of
sampling from DeRa $\pirealignedapprox$ against sampling from the retrained-from-scratch models
$\pi_{\theta}(\beta/\lambda)$.

\paragraph{Findings.} 

Figure~\ref{fig:summarization-realigned} shows two key results. First,
\gls{dera} effectively identifies KL strengths $\beta/\lambda$ that outperform
the default KL strength $\beta$. 
These values are represented by the win rates above the
red lines. Second, \gls{dera} and retrained models generally agree well.
Notably, this high agreement persists even with $\lambda > 1$, which are values
in the extrapolation regime.
This high agreement suggests that we can use \gls{dera}
as a cost-effective yet accurate method for
determining effective KL strength hyperparameter.

\paragraph{Under- and over-regularization.} 

In Figure~\ref{fig:summarization-realigned}, \gls{dera} suggests that the aligned model $\pi_{\theta}(\beta)$ might be over-regularized. Reducing the KL regularization to $\beta/\lambda$ for $\lambda>1$ enhances the win rate, as confirmed by retrained models. In Figure~\ref{fig:dpo-realigned-sax-comparison} in the Appendix, we show that \gls{dera} is also capable of identifying under-regularized models. In that case, \gls{dera} recommends a greater KL strength (with a $\lambda<1$) for improved performance.

\subsection{Hallucination mitigation}
To illustrate \gls{dera} on another real-world problem, we {provide a qualitative example of hallucination mitigation} for Retrieval Augmented Generation (RAG; \citealp{Lewis2020retrieval}), popularized by the recent application of LMs to search engines such as Bing and Google. Specifically, our task is to rewrite a given list of pro and con arguments in natural prose. Importantly, the rewritten text must strictly adhere to the semantics of the original arguments, without introducing new content, that is, without hallucinating.

\paragraph{Experimental setup.} We first train the PaLM 2 model \citep{Anil2023palm} to act as a reward model using LoRA \citep{Hu2022lora}, then use a LoRA-based RLHF \citep{Sun2023exploring} regularized with a KL of strength \(\beta = 0.1\) to align and mitigate hallucinations on a second PaLM 2\footnote{Specifically, for this we use the {\tt text-bison-001} version.}. We then apply \gls{dera} by varying $\lambda$ over $\{0.011, 0.1,  0.5, 0.67, 1, 2, 5, 10\}$, which corresponds to regularization strengths $\beta/\lambda$ in the range $\{ 9,   1,   0.2,   0.15, 0.1,   0.05,  0.02, 0.01\}$. The details of the datasets used for the reward model and RLHF are presented in Appendix~\ref{appendix:hallucination-mitigation}.

\begin{figure}[ht]
    \centering
  \includegraphics[width=0.47\textwidth]{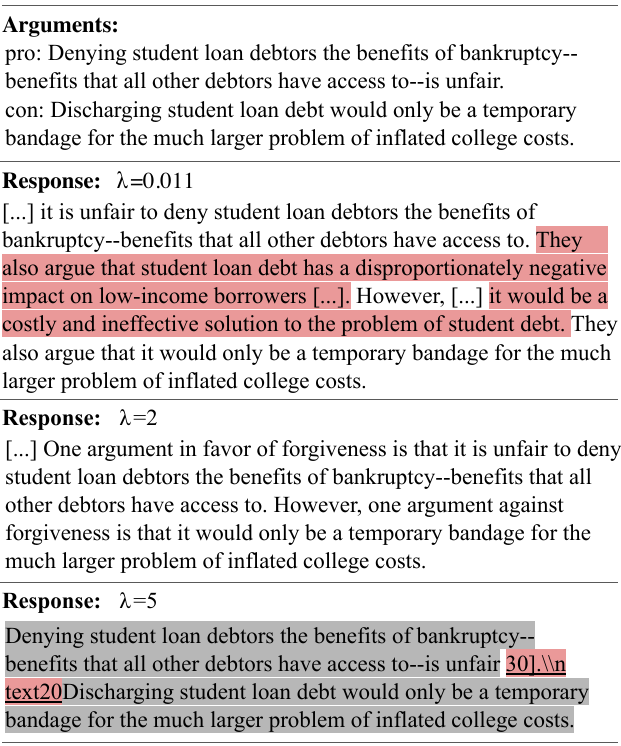}
  \caption{{\bf DeRa can control hallucinations in neutral response generation.} With a small $\lambda$ (limited alignment), the sampled response includes hallucinations (\colorbox{highlightRed}{highlighted in red}), meaning semantic content not present in the argument provided. Increasing $\lambda$ to $2$ reduces hallucinations. However, at excessively high $\lambda$, the model begins to copy the argument verbatim (\colorbox{highlightGray}{highlighted in gray}), indicating reward hacking, and produces incoherent responses (\colorbox{highlightRed}{\underline{highlighted in red and underlined}}).}
  \label{fig:hallucination-rate}
\end{figure}

\paragraph{Findings.} The results are presented in Figure \ref{fig:hallucination-rate}. Similarly to what was observed in Figure~\ref{fig:dera-header}, we see that changes in \(\lambda\) lead to changes in the style of generation. More precisely, low values of \(\lambda\) lead to a behavior more similar to the reference model, and thus a higher tendency to hallucinate. As we increase \(\lambda\), we improve the desired behavior of the model, where the arguments are correctly rewritten in natural prose without hallucinations. However, if when \(\lambda\)  increases further, the naturalness of the language decreases, with the model resorting to copying and pasting the initial arguments verbatim, and losing coherence.

\section{Related work}

\paragraph{Multi-reward RLHF.}

Several studies have explored the issue of balancing multiple rewards
\citep{Rame2023rewarded, Jang2023personalized, Mitchell2024emulator}.  Unlike
our approach, which combines a reference model and an aligned model,
multi-reward methods aim to combine multiple models obtained from different
rewards, each independently trained from the reference model. These methods are
driven by the recognition that humans have diverse expectations when interacting
with language models, each characterized by a reward function.  For instance, in
training chatbots to serve as human assistants, two pertinent reward functions
are the chatbot's helpfulness and harmlessness. After training, these models are
combined and weighted, either through parameter interpolation
\citep{Rame2023rewarded, Jang2023personalized} or model ensemble
\citep{Mitchell2024emulator}, enabling to control the strength of each reward.

\paragraph{Proxy approaches for finetuning.} 

Emulated fine-tuning (\citealt{Mitchell2024emulator}; EFT) explores the capabilities of language
models across two dimensions: model scale (large vs small models) and training stages (pretraining vs finetuning). 
EFT is a scale-decoupling approach that transfers the finetuning effects of a small LM to a large LM and vice versa. While \citet{Mitchell2024emulator} used this approach as a tool for analyzing the capabilities of LMs, \citet{Liu2024tuning} demonstrated the empirical effectiveness of this proxy-tuning approach, showing it competes with standard finetuning across various benchmarks.
Our approach shares similarities with EFT/proxy-tuning in that both merge trained models at the output level. While this approach decouples model scales, our
objective is to strike a balance between reward and regularization.
To that end, we show that our approach is a cost-effective proxy for full retraining with different degrees of alignment. {Furthermore, \citet{Lu2023inference, Deng2023reward, Khanov2024alignment, Huang2024deal} have explored using auxiliary models, such as separate reward models, to guide the text generation process of LMs.}

\paragraph{Other decoding approaches that merge logits.}
Several sampling approaches merge logits from multiple language models like DeRa, but with different objectives. The fusion approaches \citep{Gulcehre2015using, Stahlberg2018simple} aim to use monolingual LMs to improve machine translation. Contrastive decoding approaches \citep{Li2023contrastive} aim to enhance incoherence and lexical diversity in text generation. The Classifier-free guidance approach aims to improve prompt adherence in text generation. Speculative decoding approaches \citep{Leviathan2023fast, Chen2023accelerating} speed up token generation by outputting multiple tokens at a time. 

\section{Conclusion}
We introduced DeRa, a method for adjusting the KL regularization strength during decoding for realigning language models. Based on the variational perspective of the KL-regularized alignment objective in \eqref{eq:kl-regularized-aligned-model}, 
we proved that aligned models with varying KL regularization strengths are all geometric mixtures; these mixtures combine the probability distributions of a reference model and an aligned model, varying only in their mixing weights.
DeRa uses this knowledge to approximate these mixtures autoregressively during decoding. This approach gives DeRa a simple implementation and a clear interpretation. 

One of DeRa's advantages is its ability to adjust regularization levels for individual users, prompts, or tasks. Many open-weights models, such as Llama-2 \citep{Touvron2023llama2} and Mistral 7B \citep{Jiang2023mistral}, offer publicly available checkpoints for base and instruction-fine-tuned models. These models can act as references and aligned models alongside DeRa, allowing practitioners to tailor language model alignment to specific user preferences or downstream applications. Additionally, as we experimentally validated, DeRa can be used to efficiently identify promising regularization strengths to retrain a model. This streamlines hyperparameter tuning and reduces computational costs by avoiding unnecessary retraining across a wide range of regularization strengths.

\section*{Impact statement}
We propose a method for exploring and adjusting regularization strengths in language model alignment. 
This work should be viewed within the broader context of language model alignment techniques, aiming to promote friendly, safe, and harmless responses in language models.
It can also be viewed as a way to sweep over the regularization strength at decoding time, streamlining hyperparameter selection and reducing the number of retraining runs to get optimal regularization strengths.

\section*{Acknowledgment}
We thank Johan Ferret for his helpful feedback on a draft of this paper. 
We thank Bilal Piot and Rémi Munos for their help with the auto-evaluator for the summarization task.

\bibliography{project}
\bibliographystyle{icml2024}

\newpage
\appendix
\onecolumn

\section{Proof of Proposition~\ref{prop:logits-combine} \label{appendix:logits-proof}}

\logitscombine*

\begin{proof}[Proof of Proposition~\ref{prop:logits-combine}]

We denote $\vp_t^\textrm{ref} \coloneqq \piref(\cdot|x, y_{1:t})$ and $\vp_t^{\theta} \coloneqq \pi_\theta(\beta)(\cdot|x, y_{1:t}).$ We note that

\begin{align}
\vp_t^\textrm{ref} \odot \big[\vp_t^{\theta} \oslash \vp_t^\textrm{ref} \big]^\lambda & =\begin{bmatrix}
           \frac{\exp(\vhref_t[1])}{\sum_{i=1}^V\exp(\vhref_t[i])} \\
           \vdots \\
          \frac{\exp(\vhref_t[V])}{\sum_{i=1}^V\exp(\vhref_t[i])} 
\end{bmatrix} \odot \begin{bmatrix}
           \frac{\exp(\vh_t^{\theta}[1])}{\sum_{i=1}^V\exp(\vh_t^{\theta}[i])} \\
           \vdots \\
          \frac{\exp(\vh_t^{\theta}[V])}{\sum_{i=1}^V\exp(\vh_t^{\theta}[i])} 
\end{bmatrix}^\lambda
\oslash \begin{bmatrix}
           \frac{\exp(\vhref_t[1])}{\sum_{i=1}^V\exp(\vhref_t[i])} \\
           \vdots \\
          \frac{\exp(\vhref_t[V])}{\sum_{i=1}^V\exp(\vhref_t[i])} 
\end{bmatrix}^\lambda \\
& = \frac{1}{\Big ( \sum_{i=1}^V\exp(\vhref_t[i]) \Big)^{1-\lambda} \Big (\sum_{i=1}^V\exp(\vh_t^{\theta}[i]) \Big)^{\lambda}} \exp \Big (\lambda \vh_t^{\theta} + (1-\lambda)\vhref_t \Big),
\end{align}
where $\odot$ and $\oslash$ are entrywise product and division. It follows that 
\[ \widehat{\pi}_{\theta
}(\cdot|x, y_{1:t}) = \frac{ \vp_t^\textrm{ref} \odot \Big [ \vp_t^{\theta} \oslash \vp_t^\textrm{ref} \Big ]^\lambda}{\Big \|\vp_t^\textrm{ref} \odot \Big [ \vp_t^{\theta} \oslash \vp_t^\textrm{ref} \Big ]^\lambda \Big\|_1} = \frac{ \exp \Big ((1-\lambda)\vhref_t + \lambda \vh_t^{\theta} \Big)}{\Big\| \exp \Big ((1-\lambda)\vhref_t + \lambda \vh_t^{\theta} \Big)\Big\|_1} = \softmax \Big(\lambda \vh_t^{\theta} + (1-\lambda)\vhref_t \Big)\]
as claimed.
\end{proof}

\section{Linear combination of multiple rewards}
\label{app:multiple-rewards}
Following the notation of Section~\ref{sec:background}, we consider the case of a linear combination of rewards $r_\lambda$ defined by
\[
r_\lambda(x, y) = \sum_{i=1}^K \lambda_i r_i(x, y)\, ,
\]
for $K$ reward functions $r_i$ and $\lambda = (\lambda_1, \ldots, \lambda_K) \in \R^K$.

Analogously writing $\pirealignedm$ for the realigned optimal distribution under the reward $r_\lambda$, we have that 
\begin{align*}
\pirealignedm(y | x) &= \frac{\piref(y | x) \exp[\frac{1}{\beta} r_\lambda(y |x)]}{\sum_{y'} \piref(y' | x) \exp[\frac{1}{\beta} r_\lambda(y' |x)]}\\
&= \frac{\piref(y | x) \exp[\frac{1}{\beta}  \sum_{i=1}^K \lambda_i r_i(x, y)]}{\sum_{y'} \piref(y' | x) \exp[\frac{1}{\beta}  \sum_{i=1}^K \lambda_i r_i(x, y')]}\, .
\end{align*}
Denoting by $\pialignedi$ the optimal distribution realigned under the reward $r_i$, we have that
\[
\frac{1}{\beta} r_i(x, y) = \log \frac{\pialignedi(y | x)}{\piref(y, x)} + \log Z_i(x)\, . 
\]
Plugging this into the realigned distribution expression yields
\begin{align*}
\pirealignedm(y | x) &= \frac{\piref(y | x) \prod_{i=1}^K \Big(\frac{\pialignedi(y | x)}{\piref(y, x)} Z_i(x) \Big)^{\lambda_i}}{\sum_{y'} \piref(y' | x) \prod_{i=1}^K \Big(\frac{\pialignedi(y' | x)}{\piref(y', x)} Z_i(x) \Big)^{\lambda_i}}\\
&= \frac{\piref(y | x)^{1-\bar \lambda} \prod_{i=1}^K \pialignedi(y | x)^{\lambda_i}}{\sum_{y'} \piref(y' | x)^{1-\bar \lambda} \prod_{i=1}^K \pialignedi(y' | x)^{\lambda_i}}\, ,
\end{align*}
where $\bar \lambda = \mathbf{1}^\top \lambda$ is the sum of the $\lambda_i$. Use of the logits $h_t^{\theta_i}(\beta)$ with multiple linear combinations (and autoregressive approximation) carries through in a similar fashion.

\section{Alignment methods \label{appendix:alignment-methods}}

\paragraph{Policy gradient methods}

The policy gradient method updates the \gls{lm} with an estimate of the following gradient:

\begin{equation} \label{eq:vanilla-policy-gradient}
\EE_{x \sim \rho, y \sim \pi_{\theta}(\cdot|x)} \left[ \nabla_{\theta} \log \pi_{\theta} (y|x) \Big ( R(x,y) - \beta \textrm{KL} \big(\pi_{\theta}(\cdot | x), \piref(\cdot | x) \big)  \Big)  \right].
\end{equation}

The proximal policy optimization (PPO; \citealt{Schulman2017proximal}) is a variant of the vanilla policy gradient optimization. It replaces reward with
general advantage estimation \citep{Schulman2015high} and introduces clipped probability ratios. For the length reward problem, we use \gls{ppo}. For the summarization task, we used the vanilla policy gradient optimization \eqref{eq:vanilla-policy-gradient}, consistent with \citet{Munos2023nash}.

\paragraph{Direct preference optimization}
Direct preference optimization (DPO; \citealp{Rafailov2023direct}) is an approach that directly optimizes the policy through a loss function defined via the Bradley-Terry reward model, without using a reward function. Given a dataset $\mathcal{D}$ that contains tuples of a query and two responses favored and unfavored by humans $(x, y_w, y_l)$, the \gls{dpo} loss is defined as
$$
\mathcal{L}_{\mathrm{DPO}}\left(\pi_\theta ; \piref\right)=-\mathbb{E}_{\left(x, y_w, y_l\right) \sim \mathcal{D}}\left[\log \sigma\left(\beta \log \frac{\pi_\theta\left(y_w \mid x\right)}{\piref\left(y_w \mid x\right)}-\beta \log \frac{\pi_\theta\left(y_l \mid x\right)}{\piref\left(y_l \mid x\right)}\right)\right].
$$
We use DPO in the summarization task (Appendix~\ref{appendix:summarization}) and in the chat model alignment task (Appendix~\ref{appendix:chat-models}).

\section{Details of experiment setup \label{appendix:experimental-details}}

\subsection{Length-reward experiments \label{appendix:length-reward}}
For this toy task of length reward, we use T5-small \citep{Raffel2020exploring} for policy and reward models. In supervised finetuning, we take a pretrained T5-small model \citep{Roberts2022t5x} and fine tune it on the Xsum dataset \citep{Narayan2018dont} with 15k steps in a batch size of $32$, yielding a model $\piref$. With $\piref$ as an initialization, we train aligned policy models $\pi^\ast(\beta/\lambda)$ to maximize the length reward \eqref{eq:length-reward} using PPO. The policy learning rate is 5e-6, and the value function learning rate is 1e-5.

As shown in the main text, a lower KL regularization $\beta/\lambda$ (i.e., with a greater $\lambda$), allows aligned models $\pi^\ast(\beta/\lambda))$ to gain more length reward (Figure~\ref{fig:length-reward-result}). However, as shown in Figure~\ref{fig:sax-compare-length-reward-models}, a greater KL strength (a smaller $\lambda$) retains a higher summarization quality. We measured the summarization quality in a way identical to our approach in Section~\ref{subsec:learning-to-summarize}: we prompt a highly capable Palm 2 model and ask it to compare the win rate of the aligned model against the SFT model; see Appendix~\ref{appendix:summarization} for more details of the auto-evaluation setup.

\begin{figure}[!ht]
    \centering
  \includegraphics[width=0.45\textwidth]{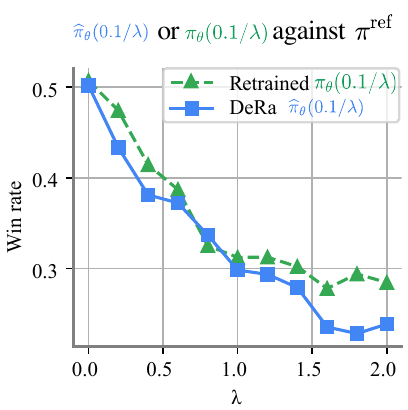}
  \caption{Comparing \gls{dera} and retrained models against the SFT model using Palm 2 auto-evaluation; the win rate of both \gls{dera} and retrained model decrease, since the length reward is an ineffective proxy of summarization quality. \label{fig:sax-compare-length-reward-models}}
  \vspace{-0.5cm}
\end{figure}

\subsection{Controlling alignment: a qualitative demonstration \label{appendix:qualitative-demo}}
For this experiment, we use the checkpoints of SFT and aligned Zephyr-7b models \citep{Tunstall2023zephyr}. To save memory, we load both SFT and aligned checkpoints\footnote{\url{https://huggingface.co/HuggingFaceH4/zephyr-7b-beta}} with 4-bit quantization. We follow the Colab demo of 
Zephyr\footnote{\url{https://huggingface.co/HuggingFaceH4/zephyr-7b-alpha/blob/main/colab-demo.ipynb}}, and set \texttt{max\_new\_tokens=256}, \texttt{temperature=0.7}, \texttt{top\_k=50}, and \texttt{top\_p=0.95} for sampling.

\subsection{Summarization experiments \label{appendix:summarization}}

\paragraph{Model training.} 
For this experiment, we use a T5-Large model as the policy model, and a separate T5-Large model as the reward model, as in \citet{Munos2023nash}. The supervised finetuning and policy gradient alignment setting mirrors the settings of \citet{Munos2023nash}. For DPO alignment, we use $\beta=0.1$, learning rate 3e-6, batch size 64, and 20k training steps.

\paragraph{Palm 2 evaluation.} 
We use Palm 2 (the version called {\tt text-unicorn-001}) to compare the quality of summarized responses. Given a to-be-summarized query $\texttt{⟨text⟩}$ and two summaries $\texttt{⟨summary1⟩}$ and $\texttt{⟨summary2⟩}$ sampled from two models $\pi_1$ and $\pi_2$, we prompt the Palm 2 Large \gls{lm} with:

\texttt{`You are an expert summary rater. Given a piece of text and two of its possible summaries, output 1 or 2 to indicate which summary is better.
Text - ⟨text⟩, Summary 1 - ⟨summary1⟩, Summary 2 - ⟨summary2⟩. Preferred Summary -'}

To avoid positional bias, we swap $\texttt{⟨summary1⟩}$ and $\texttt{⟨summary2⟩}$  with probability $0.5$, and then consistently swap back the generated preferences of Palm 2. We then compute the ratio of the number of times that responses from the $\pi_1$ are preferred over than the second model $\pi_2$; we call this ratio the win rate of $\pi_1$ against $\pi_2$. 

\paragraph{Pairwise acurracy evaluation.} 
In addition to using the Palm 2 for evaluation, we also consider alternative evaluation metrics. For a given \gls{lm} $\pi$, and a given pairwise dataset $D$ be a dataset that contains tuples of a query and two responses favored and unfavored by humans ($(x, y_w, y_l)$), we let the pairwise accuracy be

\begin{equation} \label{eq:pairwise-acc}
\text{PairAcc}(\pi; D) = \EE_{(x, y_w, y_l) \sim D}~\mathbf{1} \Big( \frac{1}{|y_w|} \log \pi(y_w | x) > \frac{1}{|y_l|} \log \pi(y_l | x)  \Big).
\end{equation}

Intuitively, the pairwise accuracy \eqref{eq:pairwise-acc} is high if the average log probability for the winning responses $y_w$ is generally greater than that of the losing responses $y_l$. Figure~\ref{fig:pairwise-accuracy} shows the pairwise accuracy of retrained models and \gls{dera} models at various $\lambda$. The results of \gls{dera} ($\squaremarker$) and retrained models ($\triangleupmarker$) are overall close for $\lambda$ being small; suggesting that \gls{dera} is a sensible proxy for the retrained model. However $\lambda$ increases, the gap between the performance of \gls{dera} and the retrained model increases. This is expected, since $\lambda > 1$ is in the extrapolation region, where $\pirealignedapprox$ in \eqref{eq:adapted-kl-logits-softargmax} fails to be a good approximator of $\pi^\ast(\beta/\lambda)$.

\paragraph{}
\begin{figure}[!ht]
    \centering
  \includegraphics[width=0.45\textwidth]{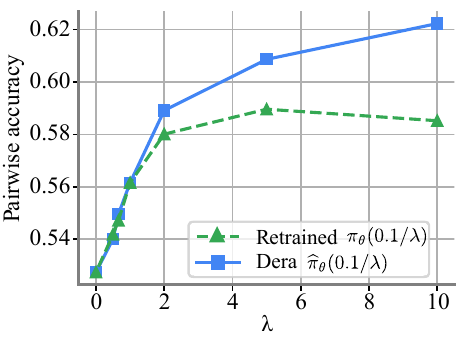}
  \vspace{-0.cm}
  \caption{Comparing \gls{dera} $\widehat{\pi}(0.1/\lambda)$ and retrained models $\pi_{\theta}(0.1/\lambda)$ with different KL strengths in the summarization task, using the pairwise accuracy metric. \label{fig:pairwise-accuracy}}
\end{figure}

\paragraph{DPO alignment.} While we only reported policy gradient alignment result in the main text, here we report result from DPO alignment (Figure~\ref{fig:dpo-realigned-sax-comparison}).

\begin{figure}[!ht]
    \centering
  \includegraphics[width=0.99\textwidth]{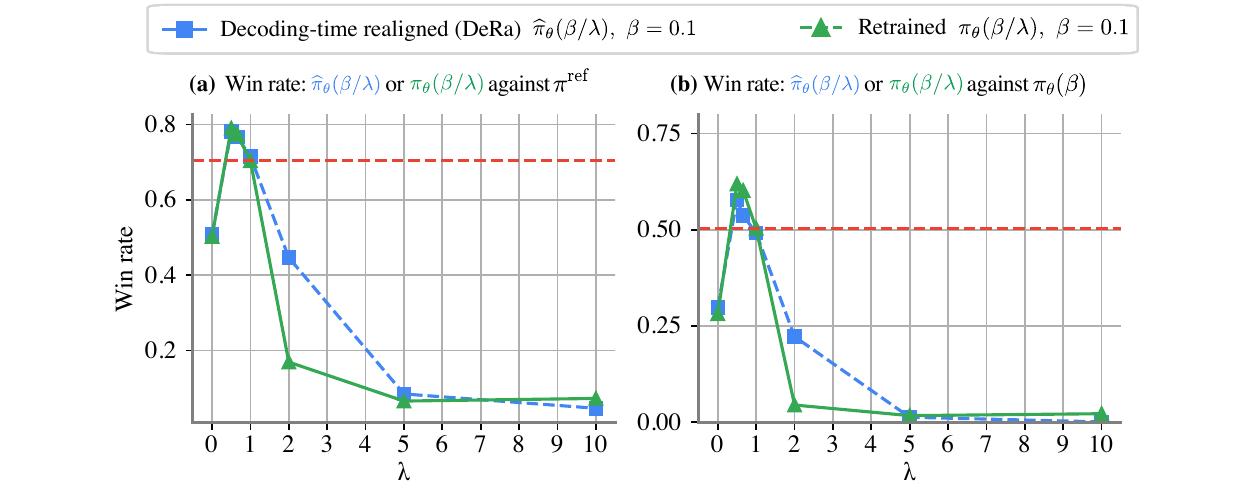}
  \caption{\textbf{Comparing \gls{dera} $\widehat{\pi}(\beta/\lambda)$ and retrained models $\pi_{\theta}(\beta/\lambda)$ with different KL strengths in the summarization task}. All models $\pi_{\theta}$ are optimized with DPO. Panel (a): compare \gls{dera} models $\widehat{\pi}_{\theta}(\beta/\lambda)$ ($\squaremarker$) or retrained model $\pi_{\theta}(\beta/\lambda)$ ($\triangleupmarker$) against the reference model. Panel (b): compare \gls{dera} $\widehat{\pi}_{\theta}(\beta/\lambda)$ ($\squaremarker$) or retrained model $\pi_{\theta}(\beta/\lambda)$ ($\triangleupmarker$) against the base-aligned model $\pi_{\theta}(\beta)$.
  These comparisons show that performances of \gls{dera} and the retrained model are correlated.  \label{fig:dpo-realigned-sax-comparison}}
\end{figure}

\subsection{Hallucination mitigation} \label{appendix:hallucination-mitigation}
\paragraph{Training the reward model.} The dataset used to train the reward model contains 1888 examples, which are split into training, validation, and evaluation datasets with 723, 242, and 223 examples respectively. Out of the examples in the training split, 388 contain hallucinations. Each example is a triple consisting of (i) a prompt with instructions to rewrite a given list of arguments in natural language (ii) the corresponding generation by the model (iii) a human annotated hallucination score (1 if the generation does not contain hallucinations and 0 if it does). The quality of the human annotations was checked by a paid third-party pool of raters. Our final reward model acting as a classifier achieves an ROC-AUC of 0.986 on the evaluation set.

\paragraph{Details on RLHF.} The dataset used to perform the RLHF contains 91 examples, which are split into training, validation, and evaluation datasets with 60, 20, and 11 examples respectively. This seemingly small dataset is compatible with the parameter-efficient tuning technique used (LoRA). Each example in these datasets consist of an user query on a sensitive question accompanied by a set of arguments in favor and against the topic in question, along with an expected “good” answer (without hallucinations). The temperature used for generations is set to \(T = 1\) and we run RLHF for a thousand steps.

\paragraph{Prompt used for evaluation.} To evaluate the resulting quality of the model after RLHF, we used a state of the art LLM as an auto-rater for our task. Our prompt uses two techniques, namely we provide six few-shot examples in it and we demand the model to execute the task as an expert. Each few-shot example has the following structure: 

\texttt{Reference arguments: [ \{arguments\} ]\\
The natural language style paragraph version of these arguments: [ \{paragraph\} ]\\
Expert review: the natural language paragraph contains additional points to the reference arguments (yes/no):}

The evaluation prompt is then constructed using the following template:

\texttt{The following are examples of an expert noting when a natural language style paragraph of text contains additional arguments to a given set of reference arguments on a topic.\\
\\
\{fewshot example 1\}\\
\\
\{fewshot example 2\}\\
\\
\{fewshot example 3\}\\
\\
\{fewshot example 4\}\\
\\
\{fewshot example 5\}\\
\\
\{fewshot example 6\}\\
\\
Expert review of an additional case where it was not initially known if it contained extra arguments or not:\\
\\
Reference arguments: [ \{arguments\} ]\\
The natural language style paragraph version of these arguments: [ \{paragraph\} ]\\
Expert review: the natural language paragraph contains additional points to the reference arguments (yes/no):}

\subsection{Aligning general-purpose chat models \label{appendix:chat-models}}
We apply \gls{dera} to adjust the KL regularization strength for general-purpose chat models. We focus on Zephyr-7b \citep{Tunstall2023zephyr}, a high-performing, open-weight chat model that is fine-tuned based on the Mistral 7b model \citep{Jiang2023mistral} with \gls{dpo}. The open-weight checkpoints of the Zephyr-7b contain both the SFT model $\piref$, and the aligned model $\pi_{\theta}(\beta)$ trained at the KL strength $\beta = 0.1$ \citep{Tunstall2023zephyr, Tunstall2023alignment}. With DeRa, we can flexibly explore the performance of Zephyr-7b models at different KL strengths.

We apply \gls{dera} to obtain different realigned Zephyr-7b models $\pirealignedapprox$ by linearly sweeping $\lambda$ from $0$ to $2.0$ with a stepsize $0.2$. We then validate these models using MT-Bench \citep{Zheng2023judging}, a dataset with 160 questions across eight knowledge areas, formatted in a multi-turn style. The performance of different realigned models $\pirealignedapprox$, indicated by $\squaremarker$ in Figure~\ref{fig:zephyr-realigned}, is evaluated based on the quality of its responses to these questions, with GPT-4 providing scores ranging from 1 to 10. Inspired by the rewarded soup \citep{Rame2023rewarded} approach, we also evaluated a weight-combine variant of DeRa, whose performance is indicated by $\circlemarker$ in Figure~\ref{fig:zephyr-realigned}. While the standard DeRa ($\squaremarker$) linearly combine logits (Algorithm~\ref{alg:dera}), the weight-combining DeRa linearly combine parameters of the aligned model and the reference model, as in \citet{Rame2023rewarded}.

\begin{figure}[ht]
    \centering
  \includegraphics[width=0.49\textwidth]{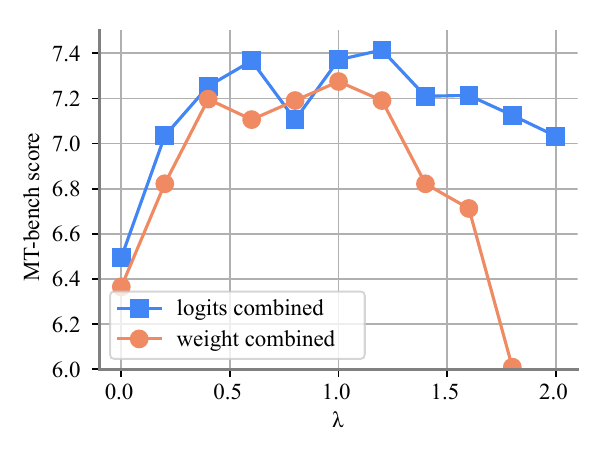}
  \vspace{-0.5cm}
  \caption{\textbf{Evaluating Zephyr-7b models with different DeRa-adjusted KL strengths on MT-bench}. These DeRa models $\pirealignedapprox$ use $\beta = 0.1$ and $\lambda \in \{0, 0.2, 0.4, \ldots, 2.0\}$ on MT-bench. The cases of $\lambda=0$ and $\lambda=1$ match the SFT model $\piref$ and aligned checkpoints $\pi_{\theta}(\beta)$ provided in the Zephyr-7b release \citep{Tunstall2023zephyr}. \label{fig:zephyr-realigned}}
\end{figure}

Based on Figure~\ref{fig:zephyr-realigned}, we anticipate a limited potential for improvement by adjusting KL regularization strength. This is because the base KL strength with $\lambda=1$ pretty much attains the best MT-bench score in Figure~\ref{fig:zephyr-realigned}. However, based on Figure~\ref{fig:zephyr-realigned}, we hypothesize that another sensible choice of KL strength is around $\lambda=0.5$. 
This is because the MT-bench score remains close to its peak at this value, and a lower $\lambda$ leads to stronger KL regularization, which can be beneficial for tasks that rely on the SFT model's capabilities.
One such task is mathematical reasoning, as alignment training generally diminishes models' performance on math reasoning problems like the GSM8k task \citep{Beeching2023open} of solving grade school math problems. To test this hypothesis, we re-train Zephyr-7b at $\lambda=0.5$ or $\beta/\lambda = 0.2$. We confirm that its MT-bench score closely matches the original Zephyr model's score at with a KL strength $0.1$. Additionally, we demonstrate that a stronger KL strength preserves the model's strong performance on math problems, leading to an overall higher score on the Open LLM leaderboard \citep{Beeching2023open}, as shown in Table~\ref{tab:detailed-open-llm} in the Appendix.

\begin{figure}[!ht]
    \centering
  \includegraphics[width=0.99\textwidth]{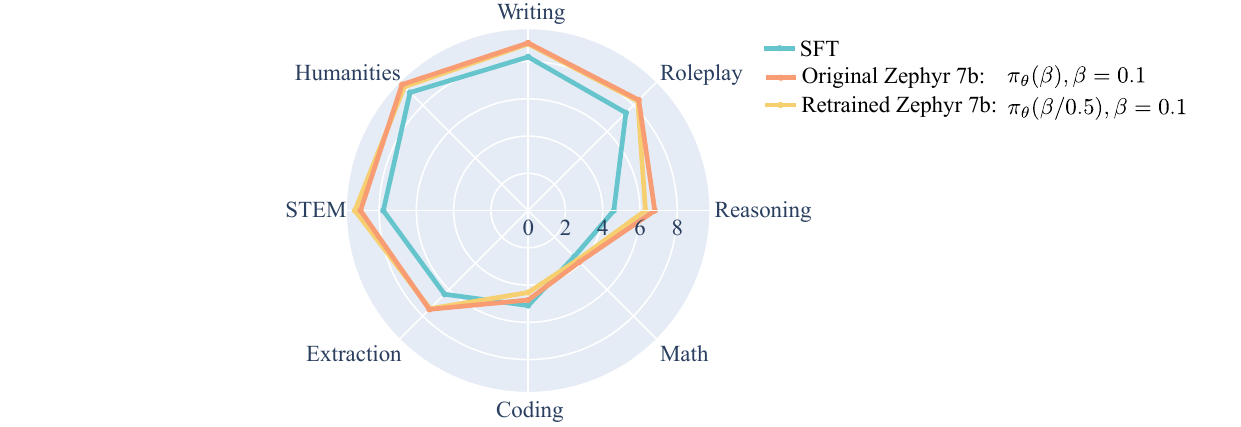}
\caption{Comparing MT-bench scores of the original Zephyr 7b \citep{Tunstall2023zephyr} with KL strength $0.1$ and the retrained Zephyr 7b at KL strength $0.1/0.5$. The results are very close. The average score of the original Zephyr 7b $\pi^\ast(\beta)$ with $\beta = 0.1$ is $7.37$; the average score of the retrained model $\pi^\ast(\beta/\lambda)$ with $\beta = 0.1$ and $\lambda=0.5$ is 7.23. \label{fig:mt-bench-retrained}}
\end{figure}

\begin{table*}[t]
\centering
\small
\begin{tabular}{lccccccc}
\toprule
& Average & ARC & HellaSwag & MMLU & TruthfulQA & Winogrande & GSM8K \\
\midrule
Zephyr-7b-beta-SFT       & 59.78  & 57.42 & 82.23 & 61.42 & 43.58 & 77.58 & \bf 36.47 \\
Zephyr-7b-beta [original; $\pi_{\theta}(0.1)$]       & 59.23  & \bf 62.03 & \bf 84.36 & 61.07 &\bf  57.45 & \bf 77.74 & 12.74 \\
Zephyr-7b-beta [retrained; $\pi_{\theta}(0.1/0.5)$] & \bf 61.55 & 61.77 & 84.04 & \bf 61.79 & 54.72 & 76.95 & 30.02\\
\bottomrule
\end{tabular}
\caption{Open LLM leaderboard. \label{tab:detailed-open-llm}}
\end{table*}

\end{document}